%% file: main_short_v2.tex
\documentclass{article}

\usepackage{microtype}
\usepackage{graphicx}
\usepackage{booktabs}

\usepackage{hyperref}

\usepackage[accepted]{icml2026}

\usepackage{amsfonts,amsmath,amssymb,amsthm}
\usepackage{enumitem}
\usepackage{algorithm}
\usepackage{algorithmic}
\usepackage{xcolor}
\usepackage{mdframed}

\newmdenv[
  linecolor=blue!60!cyan,
  backgroundcolor=blue!5,
  linewidth=0.5pt,
  roundcorner=2pt,
  skipabove=4pt,
  skipbelow=4pt,
  innerleftmargin=6pt,
  innerrightmargin=6pt,
  innertopmargin=6pt,
  innerbottommargin=6pt
]{resultbox}

\input{math_commands.tex}

\theoremstyle{plain}
\newtheorem{theorem}{Theorem}[section]
\newtheorem{proposition}[theorem]{Proposition}

\theoremstyle{definition}

\theoremstyle{remark}

\def\R{\mathbb{R}}  
\def\E{\mathbb{E}}

\def\<{\langle} \def\>{\rangle}

\icmltitlerunning{Generative Modeling via Kernelized Stochastic Interpolants}

\begin{document}

\twocolumn[
  \icmltitle{Generative Modeling via Kernelized Stochastic Interpolants}

  \begin{icmlauthorlist}
    \icmlauthor{Florentin Coeurdoux}{cfm}
    \icmlauthor{Etienne Lempereur}{ens}
    \icmlauthor{Nathana\"el Cuvelle--Magar}{ens}
    \icmlauthor{St\'ephane Mallat}{cdf,fi}
    \icmlauthor{Eric Vanden-Eijnden}{courant,cfm}
  \end{icmlauthorlist}

  \icmlaffiliation{cfm}{Capital Fund Management, Paris, France}
  \icmlaffiliation{ens}{D\'epartement d'informatique, ENS, Universit\'e PSL, Paris, France}
  \icmlaffiliation{cdf}{Coll\`ege de France, Paris, France}
  \icmlaffiliation{fi}{CCM, Flatiron Institute, New York, USA}
  \icmlaffiliation{courant}{Courant Institute of Mathematical Sciences, New York University, New York, USA}

  \icmlcorrespondingauthor{Florentin Coeurdoux}{coeurdoux.florentin@gmail.com}

  \icmlkeywords{generative modeling, stochastic interpolants, kernel methods, diffusion models, score-based models}

  \vskip 0.3in
]

\printAffiliationsAndNotice{}

\begin{abstract}
We develop a kernel method for generative modeling within the stochastic interpolant framework, replacing neural network training with linear systems. The drift of the generative SDE is $\hat b_t(x) = \nabla\phi(x)^\top\eta_t$, where $\eta_t\in\R^P$ solves a $P\times P$ system computable from data, with $P$ independent of the data dimension $d$. Since estimates are inexact, the diffusion coefficient $D_t$ affects sample quality; the optimal $D_t^*$ from Girsanov diverges at $t=0$, but this poses no difficulty and we develop an integrator that handles it seamlessly. The framework accommodates diverse feature maps---scattering transforms, pretrained generative models etc.---enabling generation and model combination without neural network training. We demonstrate the approach on financial time series, turbulence, and image generation.
\end{abstract}

\section{Introduction}
\label{sec:intro}

Generative diffusion models based on dynamical transport of measure---including score-based models~\citep{song2020score}, flow matching~\citep{lipman2022flow}, rectified flows~\citep{liu2022flow}, and stochastic interpolants~\citep{albergo2022building,albergo2023stochastic}---require learning a time-dependent drift or score function, typically by training a neural network on a regression objective. This training step is computationally expensive.

Here we show that this neural network training can be replaced by a \emph{kernel method}. Given a feature map $\phi:\R^d\to\R^P$, the drift of the generative process can be approximated as $\hat b_t(x) = \nabla\phi(x)^\top\eta_t$, where $\eta_t\in\R^P$ solves a $P\times P$ linear system computable from data. The score estimate follows from the exact drift-score relation. This reduces the learning problem from neural network training to solving linear systems---one per time step, precomputed before generation.

Since the feature map is finite-dimensional, the drift and score estimates are approximate. The diffusion coefficient $D_t$ in the generative SDE, which would be irrelevant with exact estimates, now affects sample quality. We use the optimal schedule $D_t^*$~\citep{ma_sit_2024,chen2024probabilistic}, obtained by minimizing a path KL bound via Girsanov's theorem. The resulting $D_t^*$ is large near $t=0$ (strong diffusion when the distribution is nearly Gaussian) and vanishes near $t=1$ (pure transport near the data). We develop an integrator that handles both limits seamlessly.

The framework accommodates diverse feature maps, like scattering transforms for time series and random fields and pretrained generative models. The latter is particularly notable: pretrained velocity fields can be used directly as feature gradients, enabling combination of models with different architectures or training stages through linear algebra, without any retraining.
This work is complementary to Moment-Guided Diffusion~\citep{lempereur2025mgd}, which uses similar feature maps but constrains moments rather than regressing the drift.

\paragraph{Main contributions.}
\begin{itemize}[leftmargin=0.2in,itemsep=1pt]
    \item A kernel reformulation of stochastic interpolants where the drift is estimated by solving a $P\times P$ linear system, with $P$ the number of features independent of the input dimension $d$ (Proposition~\ref{th:main}).
    \item Application of the optimal diffusion coefficient $D_t^*$~\citep{ma_sit_2024,chen2024probabilistic} to the kernel setting, with a KL bound on generation error (Proposition~\ref{prop:optimal:eps}): $D_t^*$ diverges at $t=0$ (Gaussian regime) and vanishes at $t=1$ (ODE transport).
    \item An integrator for the optimal SDE that handles $D_0^*=\infty$ without clamping (Section~\ref{sec:integrator}).
    \item Practical constructions using scattering transforms and pretrained generative models, with the ability to combine them (Section~\ref{sec:practical}).
\end{itemize}

\paragraph{Related work.}
Our approach builds on stochastic interpolants~\citep{albergo2022building,albergo2023stochastic}, reformulating them as kernel methods~\citep{muandet2017kernel}. The optimal diffusion coefficient $D_t^*$ we use was derived in~\citet{ma_sit_2024,chen2024probabilistic}; a similar expression appears in the stochastic optimal control setting~\citep{havens2025adjoint}. Closely related is Moment-Guided Diffusion (MGD)~\citep{lempereur2025mgd}, which also uses feature maps within stochastic interpolants but imposes moment constraints $\E[\phi(X_t)] = \E[\phi(I_t)]$ to sample maximum entropy distributions. MGD requires $D_t\to\infty$ to reach the maximum entropy limit, whereas our approach uses the finite optimal $D_t^*$ to minimize generation error. Our method is simpler---solving a linear regression beforehand rather than enforcing constraints throughout the dynamics---but offers less control over which statistics are matched.

RKHS formulations of diffusion models~\citep{maurais2024samplingunittimekernel,yi2024denoisingdatameasurementerror} use standard kernels rather than problem-specific feature maps; MMD-based methods~\citep{gretton2012kernel,galashov2024deepmmdgradientflow,debortoli2025distributionaldiffusionmodelsscoring} require engineering feature maps, whereas we use $\nabla\phi$ directly from existing models or transforms. For model combination, our approach requires no retraining and operates on velocity fields, unlike parameter averaging~\citep{wortsman2022model,biggs2024diffusion} or LoRA merging~\citep{hu2021lora,shah2024ziplora}.

\section{Theory}
\label{sec:theory}

\subsection{Stochastic Interpolants}
\label{sec:interpolants}

Let $\mu$ be a target probability distribution on $\R^d$, known only through samples, and assume that $\mu$ has a differentiable, everywhere positive density. The \textbf{stochastic interpolant} between noise $z\sim\mathsf{N}(0,\mathrm{Id}_d)$ and data $a\sim\mu$ is
\begin{equation}
    \label{eq:interpolant}
    I_t = \alpha_t z + \beta_t a, \quad z\sim\mathsf{N}(0,\mathrm{Id}_d),\; a\sim\mu,\; a\perp z,\; t\in[0,1],
\end{equation}
where $\alpha,\beta\in C^1([0,1])$ with $\alpha_0=\beta_1=1$, $\alpha_1=\beta_0=0$, and $\dot\alpha_t<0$, $\dot\beta_t>0$ on $t\in(0,1)$ (e.g., linear $\alpha_t=1-t$, $\beta_t=t$ or trigonometric $\alpha_t = \cos(\tfrac12 \pi t)$, $\beta_t = \sin(\tfrac12 \pi t)$).
From~\citet{albergo2023stochastic}, $I_t$ has density $\rho_t$, which coincides with that of the solution $X_t$ of
\begin{equation}
    \label{eq:sde}
    \begin{gathered}
        dX_t = b_t(X_t)\,dt + D_t\, s_t(X_t)\,dt + \sqrt{2D_t}\,dW_t, \\
        X_0 = z\sim\mathsf{N}(0,\mathrm{Id}_d).
    \end{gathered}
\end{equation}
where $D_t \ge 0$ is a tunable diffusion coefficient, the velocity field is $b_t(x) = \E[\dot I_t | I_t = x]$ for $t\in[0,1]$, and the score $s_t(x) = \nabla\log\rho_t(x)$ satisfies, by Tweedie's identity, $s_t(x) = -\alpha_t^{-1}\E[z|I_t=x]$ for $t\in[0,1)$. The velocity is the minimizer of the regression loss
\begin{equation}
    \label{eq:velocity:loss}
    L_b[\hat b_t] = \E\big[|\hat b_t(I_t) - \dot I_t|^2\big],
\end{equation}
and $b_t(x)$ and $s_t(x)$ are related by
\begin{equation}
    \label{eq:score:drift}
    s_t(x) = \frac{\beta_t\, b_t(x) - \dot\beta_t\, x}{\alpha_t\gamma_t}\,,
    \qquad \gamma_t := \alpha_t\dot\beta_t - \dot\alpha_t\beta_t > 0.
\end{equation}
Since $I_1 = a \sim \mu$ by construction, integrating~\eqref{eq:sde} yields samples from the target: $X_1 \sim \mu$.

\subsection{Drift Estimation via Linear Systems}
\label{sec:linear}

Given a \textbf{feature map} $\phi:\R^d\to\R^P$ with Jacobian $\nabla\phi(x)\in\R^{P\times d}$ (see Section~\ref{sec:practical} for concrete choices), we approximate the velocity as $\hat b_t(x) = \nabla\phi(x)^\top\eta_t$ with $\eta_t\in\R^P$, reducing~\eqref{eq:velocity:loss} to a quadratic problem in $\eta_t$.
\begin{proposition}[Drift estimation]
    \label{th:main}
    Define the Gram matrix of the feature gradients $\{\nabla\phi_i\}_{i=1}^P$ under the law of $I_t$:
    \begin{equation}
        \label{eq:Kt}
        K_t = \E\big[\nabla\phi(I_t)\cdot\nabla\phi(I_t)^\top\big] \in \R^{P\times P}.
    \end{equation}
    Assume $K_t$ is positive-definite. Then the minimizer of $L_b$ over $\hat b_t(x) = \nabla\phi(x)^\top\eta_t$ is
    \begin{equation}
        \label{eq:bt:kernel}
        \hat b_t(x) = \nabla\phi(x)^\top\eta_t, \qquad
        K_t\,\eta_t = \E\big[\nabla\phi(I_t)\cdot\dot I_t\big].
    \end{equation}
\end{proposition}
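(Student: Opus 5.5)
The plan is to substitute the ansatz $\hat b_t(x)=\nabla\phi(x)^\top\eta$ into the loss~\eqref{eq:velocity:loss}. This turns $L_b$ into a quadratic function of $\eta\in\R^P$, which we then minimize by the first-order condition. Expanding the square gives
\begin{equation*}
L_b(\eta)=\eta^\top \E\big[\nabla\phi(I_t)\nabla\phi(I_t)^\top\big]\eta-2\,\eta^\top\E\big[\nabla\phi(I_t)\cdot\dot I_t\big]+\E\big[|\dot I_t|^2\big].
\end{equation*}
Writing $c_t:=\E[\nabla\phi(I_t)\cdot\dot I_t]\in\R^P$, this reads $L_b(\eta)=\eta^\top K_t\eta-2\eta^\top c_t+\E|\dot I_t|^2$.

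For this expansion to be legitimate, each expectation must be finite. We need $\E\|\nabla\phi(I_t)\|^2<\infty$, which is implicit in $K_t$ being well defined. We also need $\E|\dot I_t|^2=\dot\alpha_t^2 d+\dot\beta_t^2\E|a|^2<\infty$. Under these two conditions, $c_t$ is finite by Cauchy--Schwarz. I would state these as the standing integrability assumptions and check them briefly.

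Since $K_t$ is positive-definite, $L_b(\eta)$ is strictly convex in $\eta$. It is also coercive, because $\eta^\top K_t\eta\ge\lambda_{\min}(K_t)|\eta|^2$ with $\lambda_{\min}(K_t)>0$. Hence $L_b$ has a unique minimizer, characterized by $\nabla_\eta L_b=2K_t\eta-2c_t=0$, that is, $K_t\eta_t=c_t$. This is exactly~\eqref{eq:bt:kernel}. Equivalently, completing the square gives
\begin{equation*}
L_b(\eta)=(\eta-K_t^{-1}c_t)^\top K_t(\eta-K_t^{-1}c_t)+\text{const},
\end{equation*}
which shows minimality directly without any calculus.

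As an optional remark, one could note that $c_t=\E[\nabla\phi(I_t)\,b_t(I_t)]$ by the tower property. Thus $\hat b_t$ is the $L^2(\rho_t)$-orthogonal projection of the true drift $b_t$ onto $\mathrm{span}\{\nabla\phi_i\}$. This observation is not needed for the proof.

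There is no real obstacle in this argument. The only point requiring care is the integrability needed to interchange the expansion and the expectation.
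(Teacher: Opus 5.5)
Your proposal is correct and follows the same route as the paper. Both expand $L_b$ into the quadratic $\eta^\top K_t\eta - 2\eta^\top \E[\nabla\phi(I_t)\cdot\dot I_t] + \E|\dot I_t|^2$ and use positive-definiteness of $K_t$ to get the unique minimizer from the first-order condition; your integrability check and completing-the-square argument only make explicit what the paper leaves implicit.
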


\begin{proof}
    Expanding $L_b = \eta_t^\top K_t \eta_t - 2\,\E[\nabla\phi(I_t)\cdot\dot I_t]^\top\eta_t + \E[|\dot I_t|^2]$, the unique minimizer under positive-definite $K_t$ satisfies $K_t\eta_t = \E[\nabla\phi(I_t)\cdot\dot I_t]$.
\end{proof}

Given samples $(z_n,a_n)_{n=1}^N$ with $z_n\sim\mathsf{N}(0,\mathrm{Id}_d)$ and $a_n\sim\mu$, setting $I_t^n = \alpha_t z_n + \beta_t a_n$ and $\dot I_t^n = \dot\alpha_t z_n + \dot\beta_t a_n$, the empirical system is
\begin{align}
    \label{eq:empirical:system}
    \hat K_t\,\eta_t &= \hat r_t, \nonumber \\
    [\hat K_t]_{ij} &= \frac{1}{N}\sum_{n=1}^N \nabla\phi_i(I_t^n)\cdot\nabla\phi_j(I_t^n), \nonumber \\
    [\hat r_t]_i &= \frac{1}{N}\sum_{n=1}^N \nabla\phi_i(I_t^n)\cdot\dot I_t^n.
\end{align}
This $P\times P$ system is solved once on a time grid before generation; the coefficients $\{\eta_{t_k}\}$ are reused for any number of samples---no further training is required.

\subsection{Estimation Error and Optimal Diffusion}
\label{sec:estimation:error}

Since $\phi:\R^d\to\R^P$ is finite-dimensional, the kernel $k(x,y) = \phi(x)^\top\phi(y)$ has rank at most $P$ and is not characteristic. The velocity estimate $\hat b_t = \nabla\phi^\top\eta_t$ from Proposition~\ref{th:main} is therefore the best approximation in class, not the exact velocity. The corresponding score estimate, obtained via~\eqref{eq:score:drift}, is
\begin{equation}
    \label{eq:score:from:drift}
    \hat s_t(x) = \frac{\beta_t\,\hat b_t(x) - \dot\beta_t\, x}{\alpha_t\gamma_t}.
\end{equation}
Since this relation is exact, errors in $\hat b_t$ translate to proportional errors in $\hat s_t$. Using~\eqref{eq:score:from:drift} to express the score in terms of the drift, the SDE~\eqref{eq:sde} becomes
{\footnotesize
\begin{equation}
    \label{eq:sde:explicit}
    dX_t = \left[\left(1 + \frac{D_t\beta_t}{\alpha_t\gamma_t}\right)\nabla\phi(X_t)^\top \eta_t - \frac{D_t\dot\beta_t}{\alpha_t\gamma_t}\,X_t\right]dt + \sqrt{2D_t}\,dW_t.
\end{equation}
}
The generated density $\rho_t^D$ depends on $D_t$. The choice of $D_t$ matters precisely because the estimates are inexact: if $\hat b_t = b_t$ and $\hat s_t = s_t$, the law of $X_t$ would equal that of $I_t$ for any $D_t \ge 0$~\citep{albergo2023stochastic}. By Girsanov's theorem, the KL divergence between the path measures of the exact and approximate processes is
{\footnotesize
\begin{equation}
    \label{eq:path:kl}
    D_{\mathrm{KL}}^{\mathrm{path}} = \frac{1}{4}\int_0^1 \frac{1}{D_t}\left(1 + \frac{D_t\,\beta_t}{\alpha_t\gamma_t}\right)^{\!2}\,\E\big[|b_t(I_t) - \nabla\phi(I_t)^\top \eta_t|^2\big]\,dt.
\end{equation}
}
By the data processing inequality, this bounds the KL divergence between the generated and target distributions at the terminal time:
\begin{equation}
    \label{eq:terminal:kl}
    D_{\mathrm{KL}}(\mu_1^D \| \mu) \le D_{\mathrm{KL}}^{\mathrm{path}},
\end{equation}
where $\mu_1^D$ is the distribution of $X_1$ and $\mu$ is the target distribution.
Minimizing the path KL over $D_t$ therefore controls the generation error.

\begin{proposition}[Optimal diffusion coefficient {\citep{ma_sit_2024,chen2024probabilistic}}]
    \label{prop:optimal:eps}
    The pointwise minimizer of the integrand in~\eqref{eq:path:kl} over $D_t > 0$ is
    \begin{equation}
        \label{eq:optimal:eps}
        D_t^* = \frac{\alpha_t\gamma_t}{\beta_t} = \frac{\alpha_t(\alpha_t\dot\beta_t - \dot\alpha_t\beta_t)}{\beta_t}\,.
    \end{equation}
\end{proposition}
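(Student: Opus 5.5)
The plan is to treat the integrand of~\eqref{eq:path:kl} at fixed $t\in(0,1)$ as a scalar function of $D_t>0$ and minimize it by elementary calculus. The error term $\E\big[|b_t(I_t) - \nabla\phi(I_t)^\top \eta_t|^2\big]$ does not depend on $D_t$: the coefficients $\eta_t$ come from Proposition~\ref{th:main}, which involves only the law of $I_t$ and not the diffusion coefficient. Assuming this error is strictly positive (otherwise every $D_t$ is optimal), minimizing the integrand reduces to minimizing
\begin{equation*}
f(D) = \frac{1}{D}\left(1 + cD\right)^2 = \frac{1}{D} + 2c + c^2 D, \qquad c := \frac{\beta_t}{\alpha_t\gamma_t}.
\end{equation*}
For $t\in(0,1)$ we have $c>0$, since $\alpha_t,\beta_t>0$ there and $\gamma_t>0$.

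The first step is to observe that $f$ is strictly convex on $(0,\infty)$, because $f''(D) = 2/D^3>0$. It also diverges both as $D\to0^+$ and as $D\to\infty$, so it has a unique interior minimizer. Setting $f'(D) = -1/D^2 + c^2 = 0$ gives $D = 1/c = \alpha_t\gamma_t/\beta_t$. Expanding $\gamma_t$ gives the second form in~\eqref{eq:optimal:eps}. A quick alternative check is AM--GM: $1/D + c^2 D \ge 2c$, with equality exactly when $D=1/c$. This also shows that the minimal value of $f$ is $4c$, so the optimal integrand is $\frac{\beta_t}{\alpha_t\gamma_t}\E\big[|b_t(I_t) - \nabla\phi(I_t)^\top \eta_t|^2\big]$.

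The only delicate point is the endpoints. At $t=0$ we have $\beta_0=0$, so $c=0$ and $f(D)=1/D$ is decreasing with no finite minimizer; this matches $D_0^*=+\infty$. At $t=1$ we have $\alpha_1=0$ and $D_1^*=0$, and the formula is read as a limit. I would state that the proposition is pointwise on $(0,1)$, with the endpoint values understood as limits, which is consistent with the paper's later remarks. There is no real obstacle beyond checking that the $D_t$-independence of the error term is legitimate. It is, because $\eta_t$ is fixed by the regression and not re-optimized with $D_t$. Pointwise minimization of the integrand then also minimizes the integral in~\eqref{eq:path:kl} over measurable schedules $D_t$.
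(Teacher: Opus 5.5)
Your proposal is correct and follows essentially the same route as the paper: you expand the integrand as $1/D + 2c + c^2 D$ with $c=\beta_t/(\alpha_t\gamma_t)$ and set the derivative to zero to obtain $D_t^*=1/c=\alpha_t\gamma_t/\beta_t$. The points you add are correct refinements that the paper leaves implicit: strict convexity or the AM--GM check for uniqueness, the $D_t$-independence of the error term (since $\eta_t$ comes from the regression), positivity of $c$ on $(0,1)$, and the endpoint limits at $t=0,1$.
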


\begin{proof}
    Writing $\lambda_t = 1 + D_t\beta_t/(\alpha_t\gamma_t)$, the integrand is $\lambda_t^2D_t^{-1} = D_t^{-1} + 2\beta_t/(\alpha_t\gamma_t) + D_t\beta_t^2/(\alpha_t\gamma_t)^2$. Differentiating and setting to zero gives $D_t^* = \alpha_t\gamma_t/\beta_t$.
\end{proof}

The optimal $D_t^*$ is large for small $t$---providing strong diffusion when the distribution is nearly Gaussian---and vanishes as $t\to 1$, where the SDE reduces to the probability flow ODE; both limits are handled seamlessly by the integrator in Section~\ref{sec:integrator}. $D_t^*$ is also the unique choice for which the time-reversed SDE becomes score-free (Appendix~\ref{app:time:reversal}). The alternative $D\to\infty$, natural in~\citet{lempereur2025mgd}, is not appropriate here (Appendix~\ref{app:exponential}).

\subsection{Optimal Diffusion and Integrator}
\label{sec:integrator}

For the optimal $D_t^* = \alpha_t\gamma_t/\beta_t$, we have $1 + D_t^*\beta_t/(\alpha_t\gamma_t) = 2$ and $D_t^*\dot\beta_t/(\alpha_t\gamma_t) = \dot\beta_t/\beta_t$, so~\eqref{eq:sde:explicit} reduces to
\begin{equation}
    \label{eq:sde:explicit:2}
    dX_t = \Big[2\nabla\phi(X_t)^\top \eta_t - \frac{\dot\beta_t}{\beta_t}\,X_t\Big]dt + \sqrt{\frac{2\alpha_t\gamma_t}{\beta_t}}\,dW_t.
\end{equation}
This can be written as $d(\beta_t X_t) = 2\beta_t\nabla\phi(X_t)^\top \eta_t\, dt + \sqrt{2\alpha_t\beta_t\gamma_t}\,dW_t$. Integrating from $t$ to $t+h$, treating $\nabla\phi(X_s)^\top\eta_s$ explicitly at time $t$, and approximating the integrals by the trapezoidal rule, dividing by $\beta_{t+h}$ gives the integrator:
\begin{equation}
    \label{eq:exp:integrator}
    \begin{split}
        X_{t+h} ={}& \frac{\beta_t}{\beta_{t+h}} X_t + h\!\left(1+\frac{\beta_t}{\beta_{t+h}}\right)\! \nabla\phi(X_t)^\top\eta_t \\
        &+ \frac{\sqrt{h(\alpha_t\beta_t\gamma_t + \alpha_{t+h}\beta_{t+h}\gamma_{t+h})}}{\beta_{t+h}}\,g_t,
    \end{split}
\end{equation}
where $g_t\sim\mathsf{N}(0,\mathrm{Id})$. This scheme handles the singular endpoint $t=0$ seamlessly: as $t\to 0$ (where $D_t^*\to\infty$ and $\beta_t\to 0$), the contribution from $X_t$ vanishes and $X_{t+h}$ is resampled from a Gaussian. No clamping of $D_t$ is required.

The generation procedure is summarized in Algorithm~\ref{alg:generation}.
\vspace{-10pt}
\begin{algorithm}
   \caption{Kernelized Stochastic Interpolant Generation (with optimal $D_t^*$)}
   \label{alg:generation}
\begin{algorithmic}[1]
   \STATE {\bfseries Input:} Feature gradients $\nabla\phi_i$, $i=1,\ldots,P$; data pairs $(z_n,a_n)_{n=1}^N$; steps $K$; schedule $\alpha_t,\beta_t,\gamma_t$
   \STATE {\bfseries Precompute:} For each $t_k = k/K$, $k=0,\ldots,K{-}1$: solve~\eqref{eq:empirical:system} for $\eta_{t_k}\in\R^P$
   \STATE Sample $X_0\sim\mathsf{N}(0,\mathrm{Id}_d)$
   \FOR{$k=0$ {\bfseries to} $K{-}1$}
      \STATE Set $t=t_k$, $h=1/K$
      \STATE $X_{t+h} \leftarrow \dfrac{\beta_t}{\beta_{t+h}} X_t + h\!\left(1+\dfrac{\beta_t}{\beta_{t+h}}\right)\!\nabla\phi(X_t)^\top\eta_t + \dfrac{\sqrt{h(\alpha_t\beta_t\gamma_t + \alpha_{t+h}\beta_{t+h}\gamma_{t+h})}}{\beta_{t+h}}\,g_t$,\\ \hspace{2em} where $g_t\sim\mathsf{N}(0,\mathrm{Id}_d)$
   \ENDFOR
   \STATE {\bfseries Output:} $X_1 \approx a\sim\mu$
\end{algorithmic}
\end{algorithm}

\vspace{-20pt}
\subsection{Choice of Feature Map}
\label{sec:practical}

The choice of feature map $\phi$ determines how well $\hat b_t = \nabla\phi^\top\eta_t$ approximates the true velocity field. We discuss two natural constructions.
\vspace{-10pt}
\paragraph{Scattering Spectra.}
For time series ($d=T$) or random fields ($d = T_1\times T_2$), the wavelet scattering transform~\citep{mallat2012group} provides a natural feature map. The scattering coefficients $S[p]x$ at various paths $p$ give a finite-dimensional, Lipschitz-continuous, translation-invariant representation. Setting $\phi_p(x) = S[p]x$ and computing $\nabla\phi_p(x) = \nabla_x S[p]x$ via backpropagation provides the feature gradients. This choice is well-suited to processes with multiscale structure, such as textures and turbulence~\citep{bruna2019multiscale}.
\vspace{-6pt}
\paragraph{Pretrained Generative Models.}
If $b_t^i(x)$, $i=1,\ldots,P$ are pretrained velocity fields (from flow matching, stochastic interpolants, etc.), we can use them directly as feature gradients by setting $\nabla\phi_i(x) = b_t^i(x)$. The $P\times P$ linear system becomes
{
\small
\begin{equation}
    \label{eq:pretrained:system}
    \sum_{j=1}^P \frac{1}{N}\sum_{n=1}^N \big(b_t^i(I_t^n)^\top b_t^j(I_t^n)\big)\;\eta_t^j = \frac{1}{N}\sum_{n=1}^N b_t^i(I_t^n)^\top\dot I_t^n,
\end{equation}
}
and the combined drift is $\hat b_t(x) = \sum_{i=1}^P \eta_t^i\,b_t^i(x)$. This enables combination of models with different architectures, training stages, or data domains, without any retraining.

\section{Numerical Illustrations}
\label{sec:experiments}

\subsection{Multiscale Time Series: Financial Log-Returns}
\label{sec:1d}

We consider daily log-returns of the S\&P~500 from January 2000 to February 2024. Financial log-returns exhibit heavy tails, volatility clustering, and the leverage effect, making this a challenging test case. Figure~\ref{fig:sampling_1D} shows that Algorithm~\ref{alg:generation} with $K = 1{,}200$ steps (trigonometric schedule, $P=217$ scattering features) faithfully reproduces the return density including heavy tails, and correctly captures the leverage effect~\citep{morel2024path}.

\begin{figure}[t]
    \centering
    \includegraphics[width=\linewidth]{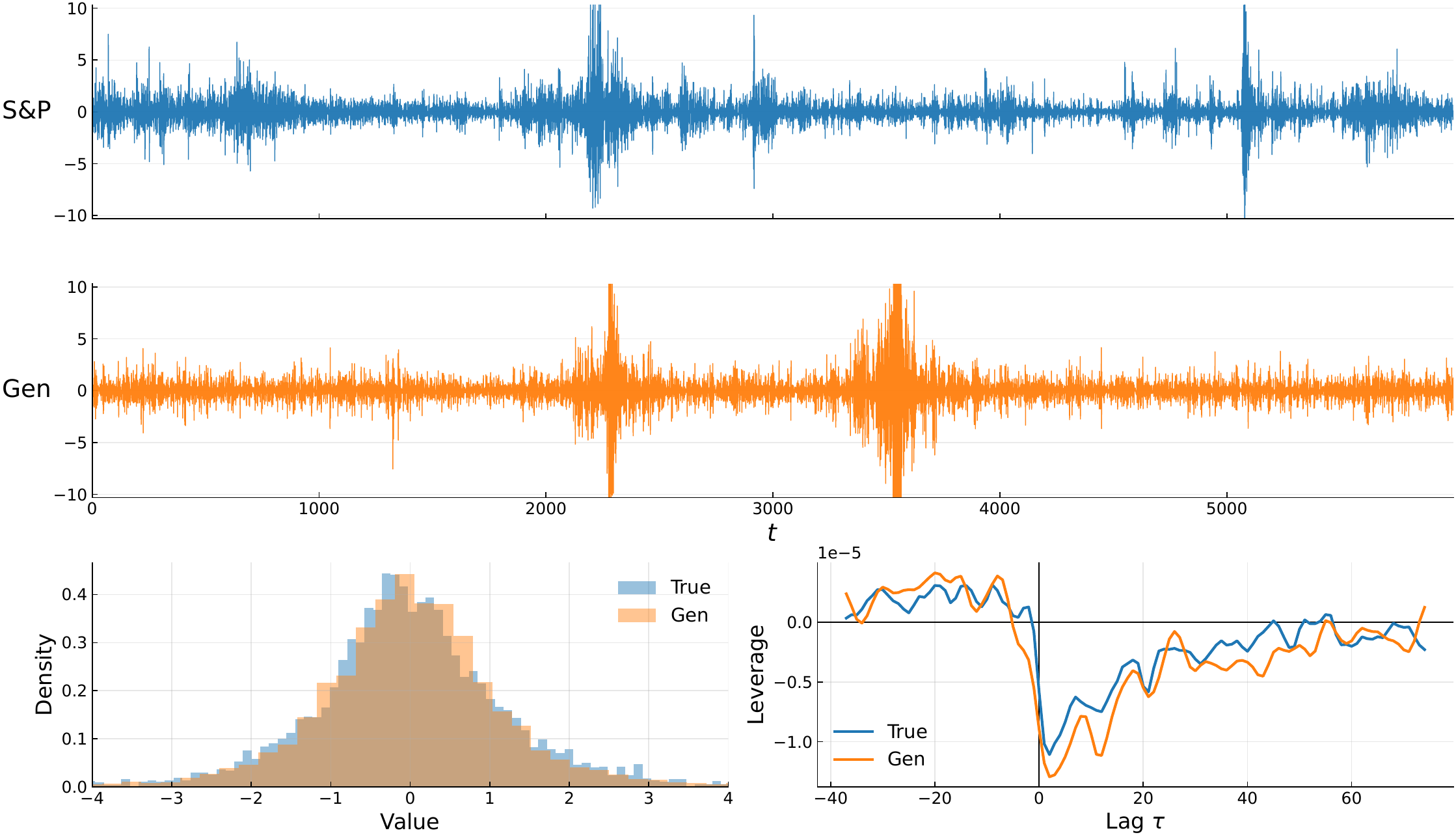}
    \caption{Generation from a single S\&P~500 daily log-returns realization ($d = 6{,}064$) using scattering features ($P=217$). \textbf{Top}: Original (blue) and generated sample (orange), both showing volatility clustering. \textbf{Bottom left}: Log-return densities, with near-perfect agreement including heavy tails. \textbf{Bottom right}: Leverage effect (correlation between past returns and current volatility).}
    \label{fig:sampling_1D}
    \vspace{-15pt}
\end{figure}

\subsection{Physical Fields}
\label{sec:physics:exp}

We test the framework on two-dimensional physical fields using Morlet wavelet scattering features~\citep{cheng2024scattering}.
\vspace{-18pt}
\paragraph{Datasets.} We consider four $64\times64$ fields: \textit{3D turbulence} pressure slices~\citep{li2008public,Perlman_2007}, \textit{dark matter} log-density slices~\citep{villaescusa2020quijote}, \textit{magnetic turbulence} MHD vorticity~\citep{allys2019rwst}, and \textit{weak lensing} convergence maps~\citep{gupta2018non}. See Appendix~\ref{app:details} for details.
\vspace{-10pt}

\paragraph{Generation.} Figure~\ref{fig:fields_generation} shows ground-truth (top) and generated (bottom) samples using Algorithm~\ref{alg:generation} with $K=5{,}000$ steps and the optimal diffusion schedule. The generated fields faithfully reproduce the visual characteristics of each dataset: intermittent turbulent bursts, the filamentary cosmic web, coherent MHD vortices, and sparse lensing spikes.

\begin{figure}[t!]
    \centering
    \includegraphics[width=\linewidth]{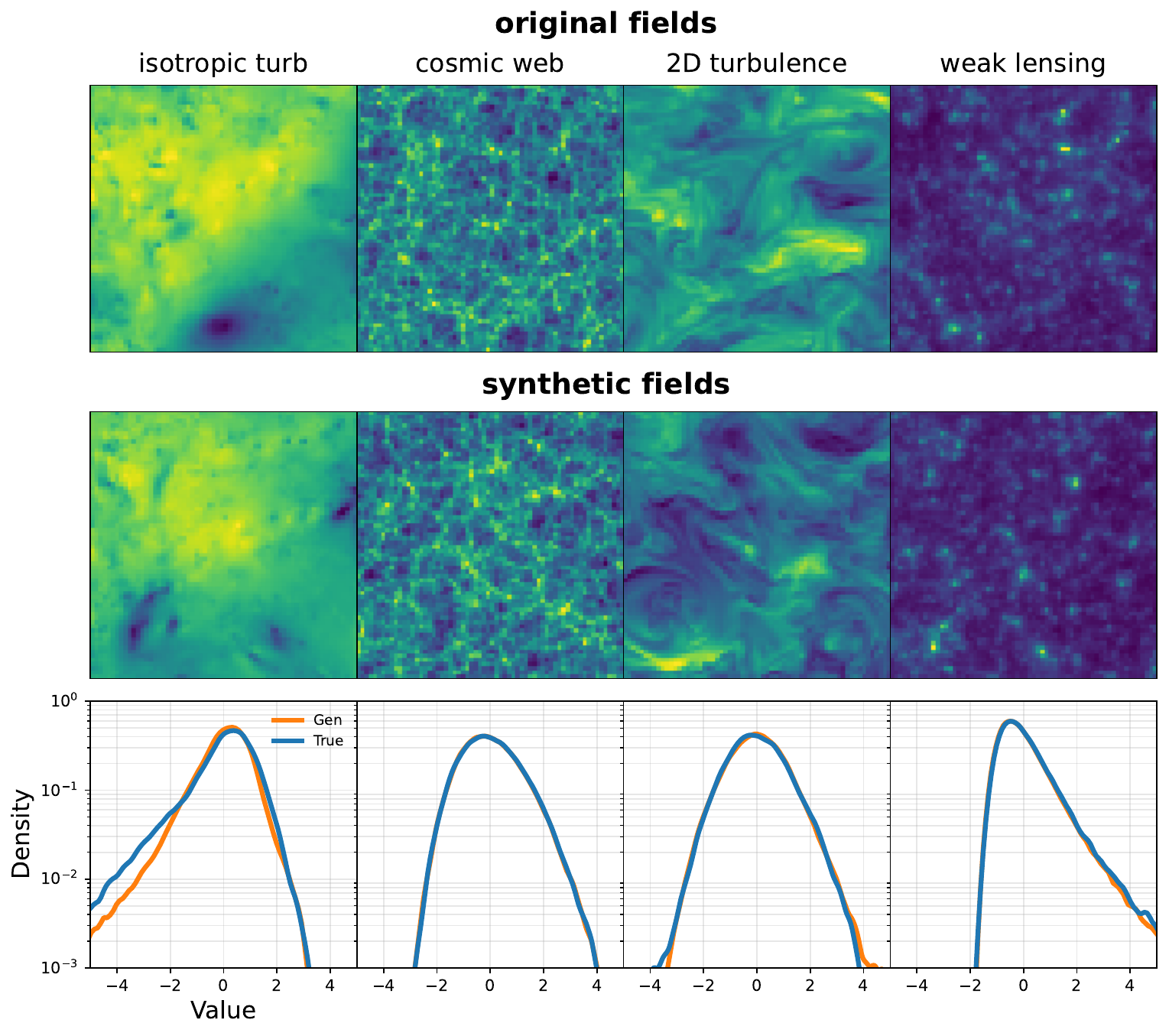}
    \caption{Generation of two-dimensional physical fields using scattering-transform features. \textbf{Top row}: ground-truth samples. \textbf{Bottom row}: samples generated by Algorithm~\ref{alg:generation} with $K=5{,}000$ steps. From left to right: 3D turbulence (pressure), dark matter (log-density), 3D magnetic turbulence (vorticity), and weak lensing (convergence).}
    \label{fig:fields_generation}
    \vspace{-6pt}
\end{figure}

\subsection{Ensemble of Weak Generative Models}
\label{sec:weak}

We evaluate the pretrained-model combination strategy on MNIST using two cohorts of 20 U-Net models trained for only 50 or 100 SGD steps (mini-batches of 128, linear schedule $\alpha_t=1-t$, $\beta_t=t$)---far too few for individual models to produce coherent digits (Figure~\ref{fig:mnist_ensemble}, top two rows).

Applying Algorithm~\ref{alg:generation} with $N=10{,}000$ pairs and $K=1{,}000$ steps, the kernelized ensemble of 20 models (100-step cohort) produces recognizable digits without additional training. To quantify, we compute oracle log-likelihood of $10{,}000$ generated samples under a fully trained U-Net: Figure~\ref{fig:mnist_ensemble} (right) shows it improves monotonically with $P$ and saturates near $P\approx 15$.

\begin{figure}[t]
    \centering
    \includegraphics[width=\linewidth]{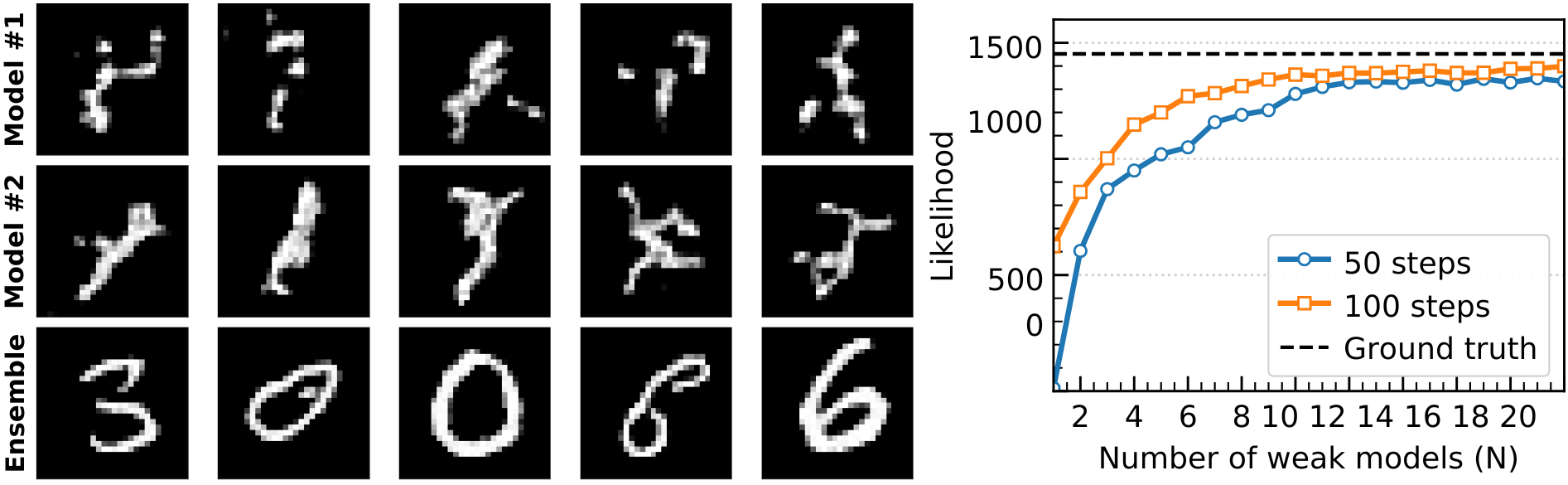}
    \caption{\textbf{Left}: MNIST samples. Rows 1--2: individual models (50 and 100 SGD steps). Row 3: kernelized ensemble ($P=20$, 100-step cohort). \textbf{Right}: Oracle log-likelihood vs.\ ensemble size $P$. Blue: 50-step; orange: 100-step. Error bars: $\pm 1$ std over 5 subsets.}
    \label{fig:mnist_ensemble}
    \vspace{-15pt}
\end{figure}

\section{Conclusion}
\label{sec:conclusion}

We have reformulated stochastic interpolant drift estimation as a $P\times P$ linear system over feature gradients, with $P$ independent of the data dimension. The optimal diffusion coefficient $D_t^*$, derived via Girsanov's theorem, minimizes a KL bound on generation error; the resulting integrator handles its singularity at $t=0$ without clamping. Feature maps from scattering transforms and pretrained velocity fields enable generation and model combination without neural network training. This approach is complementary to Moment-Guided Diffusion~\citep{lempereur2025mgd}; combining the two methods is a natural direction for future work.


\bibliography{refs}
\bibliographystyle{icml2026}

\newpage
\onecolumn
\appendix

\section{General Hilbert Space Formulation}
\label{app:hilbert}

The finite-dimensional framework of Section~\ref{sec:theory} extends to general Hilbert spaces. Let $\phi:\R^d\to\mathcal{F}$ be a feature map into a (possibly infinite-dimensional) Hilbert space $\mathcal{F}$ with inner product $\langle\cdot,\cdot\rangle_\mathcal{F}$, and let $k(x,y) = \langle\phi(x),\phi(y)\rangle_\mathcal{F}$ be the associated positive-definite kernel.

The drift ansatz becomes $\hat b_t(x) = \langle\nabla\phi(x),\eta_t\rangle_\mathcal{F}$, where $\nabla\phi(x)$ maps from $\R^d$ to $\mathcal{F}$ and $\langle\nabla\phi(x),\eta_t\rangle_\mathcal{F}\in\R^d$. The operator $K_t = \E[\nabla\phi(I_t)\cdot\nabla\phi(I_t)] \in \mathcal{F}\otimes\mathcal{F}$ generalizes the matrix~\eqref{eq:Kt}, and the linear system $K_t\eta_t = \E[\langle\nabla\phi(I_t),\dot I_t\rangle]$ generalizes~\eqref{eq:bt:kernel}.

\begin{theorem}[Exact recovery under characteristic kernels]
    \label{th:characteristic}
    If $k$ is characteristic and $K_t$ is positive-definite for all $t\in[0,1]$, then $\hat b_t(x) = \langle\nabla\phi(x),\eta_t\rangle_\mathcal{F}$ recovers the true velocity field $b_t(x) = \E[\dot I_t|I_t=x]$ exactly for all $t$.
\end{theorem}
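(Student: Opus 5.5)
Although $b_t$ is not itself of the form $\langle\nabla\phi(x),\eta\rangle_\mathcal{F}$ a priori, the plan is to show that the first-order condition of the regression forces the residual to vanish. Fix $t$ and let $\eta_t$ solve $K_t\eta_t = \E[\langle\nabla\phi(I_t),\dot I_t\rangle]$. The residual is $r_t(x) := b_t(x) - \langle\nabla\phi(x),\eta_t\rangle_\mathcal{F}$.

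\emph{Step 1: rewrite the normal equation in terms of $b_t$.} By the tower property, $\E[\langle\nabla\phi(I_t),\dot I_t\rangle] = \E[\langle\nabla\phi(I_t),b_t(I_t)\rangle]$, since $b_t(I_t)=\E[\dot I_t|I_t]$. The normal equation is then exactly the statement
\[
\E\big[\nabla\phi(I_t)\, r_t(I_t)\big] = 0 \in \mathcal{F},
\]
where the left-hand side is read as $\E[\sum_{k}\partial_k\phi(I_t)\, r_t^k(I_t)]$.

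\emph{Step 2: integrate by parts.} Pairing with an arbitrary $f\in\mathcal{F}$ gives $\int \langle f,\nabla\phi(x)\rangle\cdot r_t(x)\,\rho_t(x)\,dx = 0$. Set $g_f(x) = \langle f,\phi(x)\rangle$, which ranges over the RKHS $\mathcal{H}_k$. Then $\int \nabla g_f\cdot (r_t\rho_t)\,dx = 0$, so after integration by parts (assuming enough decay of $\rho_t$; $\rho_t$ is Gaussian-convolved for $t<1$)
\[
\int g_f(x)\, \nabla\cdot(\rho_t r_t)(x)\,dx = 0 \quad \text{for all } g_f\in\mathcal{H}_k.
\]

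\emph{Step 3: use characteristicness.} Write $\nu_t := \nabla\cdot(\rho_t r_t)$. This is a signed measure of total mass zero, because $\int \nabla\cdot(\cdot)=0$. Decompose $\nu_t = \nu^+ - \nu^-$ with equal masses and normalize both parts to probability measures. The kernel mean embeddings of the two parts then coincide, and since $k$ is characteristic, $\nu^+=\nu^-$, so $\nabla\cdot(\rho_t r_t)=0$.

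\emph{Step 4: conclude $r_t=0$.} This is the main obstacle: a divergence-free field $\rho_t r_t$ need not vanish, so characteristicness alone only identifies $\hat b_t$ up to a $\rho_t$-divergence-free perturbation. This is harmless for the transport (both drifts generate the same $\rho_t$), but exact pointwise recovery needs more. I would attack it in two ways.
\begin{itemize}[leftmargin=0.2in,itemsep=1pt]
\item Use the convexity of the loss. In the Hilbert setting the class $\{\langle\nabla\phi,\eta\rangle\}$ is dense, in $L^2(\rho_t)$, in the closure of gradients of $\mathcal{H}_k$ functions. Under the characteristic/universal assumption this is all of $\overline{\{\nabla g\}}$. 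The minimizer is the $L^2(\rho_t)$-projection of $b_t$ onto gradient fields, and Step 3 says the residual is $L^2(\rho_t)$-orthogonal to gradients.
\item Argue that $b_t$ itself lies in that gradient closure. I would try writing $b_t = \frac{\dot\beta_t}{\beta_t}x + \frac{\alpha_t\gamma_t}{\beta_t}s_t$ from~\eqref{eq:score:drift}. Both terms are gradients ($\nabla|x|^2/2$ and $\nabla\log\rho_t$), so $b_t$ is a gradient field. Orthogonality to gradients then forces $r_t=0$ in $L^2(\rho_t)$.
\end{itemize}
Positivity and continuity of $\rho_t$ upgrade this to pointwise equality, and positive-definiteness of $K_t$ ensures $\eta_t$ is well defined.

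The endpoint $t=0$ and $t=1$ cases follow by the same argument; at $t=1$ the relation~\eqref{eq:score:drift} degenerates, so it may require continuity in $t$.
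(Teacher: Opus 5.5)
Your proof is correct, up to the same regularity and decay assumptions the paper leaves implicit, and it takes a genuinely different route from the paper.

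\textbf{The paper's route (dynamic).} The paper builds a self-consistent ODE $\dot X_t^* = \langle\nabla\phi(X_t^*),\eta_t^*\rangle_\mathcal{F}$, with $\eta_t^*$ defined through the Gram operator under the law of $X_t^*$. It shows $\frac{d}{dt}\E[\phi(X_t^*)] = \frac{d}{dt}\E[\phi(I_t)]$, so the mean embeddings agree for all $t$. Characteristicness then gives $X_t^*\overset{d}{=}I_t$, and the two linear systems coincide.

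\textbf{Your route (fixed $t$).} You read the normal equation as $L^2(\rho_t)$-orthogonality of $r_t$ to $\nabla\mathcal{H}_k$. Integration by parts turns this into a vanishing mean embedding of the zero-mass signed measure $\nabla\cdot(\rho_t r_t)$. Characteristicness, extended to signed measures via the Jordan decomposition, then kills that measure.

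\textbf{Where the routes meet, and the step the paper omits.} Both routes reach the same intermediate fact, $\nabla\cdot(\rho_t(\hat b_t - b_t))=0$. In the paper's argument this is exactly what ``the flow of $\hat b_t$ has marginals $\rho_t$'' means. As you correctly observe, it does not by itself give pointwise equality, and the paper's closing ``$\hat b_t = b_t$'' passes over this silently. Your Step 4 closes it:
\begin{itemize}
\item $\hat b_t=\nabla\langle\phi,\eta_t\rangle_\mathcal{F}$ is a gradient.
\item By the score--drift relation, $b_t=\nabla\big(\tfrac{\dot\beta_t}{2\beta_t}|x|^2+\tfrac{\alpha_t\gamma_t}{\beta_t}\log\rho_t\big)$ is also a gradient.
\item So $r_t=\nabla u_t$ with $\nabla\cdot(\rho_t\nabla u_t)=0$, which forces $\int\rho_t|\nabla u_t|^2=0$.
\end{itemize}

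\textbf{What each route buys.} Yours avoids the well-posedness of the paper's mean-field ODE, including solvability of the system for $\eta_t^*$ under the not-yet-known law of $X_t^*$. Yours also delivers the pointwise claim actually stated. The paper's route gives the operationally relevant statement (the kernel flow reproduces the marginals) without using the gradient structure of $b_t$. It would therefore survive for interpolants whose velocity is not a gradient, but only in that weaker sense.

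\textbf{Minor points.}
\begin{itemize}
\item Your first bullet in Step 4 is unnecessary. Its appeal to ``characteristic/universal'' density of $\nabla\mathcal{H}_k$ among all $L^2(\rho_t)$ gradients is not justified for general fields, whose divergence need not be a finite measure. Step 3 applied to the specific smooth $r_t$ is all you need.
\item The expression $b_t=\frac{\dot\beta_t}{\beta_t}x+\frac{\alpha_t\gamma_t}{\beta_t}s_t$ degenerates at $t=0$, not $t=1$. No continuity argument is needed at either endpoint, since $b_0(x)=\dot\alpha_0x+\dot\beta_0\E[a]$ and $b_1(x)=\dot\beta_1x$ are explicitly affine gradients.
\item In infinite dimensions, positive-definiteness of $K_t$ gives uniqueness but not existence of $\eta_t$. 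Your argument in fact shows that solvability forces $b_t\in\nabla\mathcal{H}_k$. Like the paper, you tacitly assume a solution exists.
\end{itemize}
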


\begin{proof}
    Consider the ODE $\dot X_t^* = \langle\nabla\phi(X_t^*),\eta_t^*\rangle_\mathcal{F}$ with $X_0^*=I_0\sim\mathsf{N}(0,\mathrm{Id}_d)$, where $\eta_t^*$ solves
    $\E_{X_t^*}[\nabla\phi(X_t^*)\cdot\nabla\phi(X_t^*)]\,\eta_t^* = \E_{I_t}[\nabla\phi(I_t)\cdot \dot I_t]$.
    The MMD between $I_t$ and $X_t^*$ satisfies
    \begin{equation}
        \frac{d}{dt}\|\E[\phi(I_t)] - \E[\phi(X_t^*)]\|_\mathcal{F}^2 = 0
    \end{equation}
    by the chain rule and the definition of $\eta_t^*$. Since $X_0^* = I_0$, the MMD vanishes for all $t$. Characteristicness implies $X_t^*\overset{d}{=}I_t$, so the two linear systems coincide and $\hat b_t = b_t$.
\end{proof}

Examples of characteristic kernels include the Gaussian RBF $k(x,y)=\exp(-\|x-y\|^2/2\sigma^2)$, the Laplacian $k(x,y)=\exp(-\gamma\|x-y\|_1)$, and inverse multiquadratic kernels. All require infinite-dimensional feature spaces.

\section{Time Reversal of the Optimal SDE}
\label{app:time:reversal}

We show that the time reversal of the optimal SDE~\eqref{eq:sde:explicit:2} is an Ornstein--Uhlenbeck-type process, independent of the target distribution $\mu$.

\paragraph{Forward process.} With the exact drift $b_t$ and score $s_t$, the optimal SDE~\eqref{eq:sde} with $D_t^* = \alpha_t\gamma_t/\beta_t$ can be written as
\begin{equation}
    \label{eq:forward:app}
    dX_t = \left[\gamma_t\, s_t(X_t) + \frac{\dot\beta_t}{\beta_t}\,X_t\right]dt + \sqrt{\gamma_t}\,dW_t.
\end{equation}
To see this, note that from the score-drift relation~\eqref{eq:score:drift}, $b_t = \frac{\alpha_t\gamma_t}{\beta_t}s_t + \frac{\dot\beta_t}{\beta_t}x$, so
\begin{equation}
    b_t + D_t^* s_t = 2b_t - \frac{\dot\beta_t}{\beta_t}x = \frac{2\alpha_t\gamma_t}{\beta_t}s_t + \frac{\dot\beta_t}{\beta_t}x = \gamma_t\, s_t + \frac{\dot\beta_t}{\beta_t}x.
\end{equation}

\paragraph{Time reversal.} Let $\tau = 1-t$ and $Y_\tau = X_{1-\tau}$. By the standard time-reversal formula for diffusions, $Y_\tau$ satisfies
\begin{equation}
    dY_\tau = \left[-f_{1-\tau}(Y_\tau) + \gamma_{1-\tau}\,s_{1-\tau}(Y_\tau)\right]d\tau + \sqrt{\gamma_{1-\tau}}\,d\bar W_\tau,
\end{equation}
where $f_t(x) = \gamma_t s_t(x) + \frac{\dot\beta_t}{\beta_t}x$ is the forward drift. Substituting:
\begin{equation}
    \begin{aligned}
        -f_{1-\tau}(Y_\tau) + \gamma_{1-\tau}\,s_{1-\tau}(Y_\tau)
        &= -\gamma_{1-\tau}\,s_{1-\tau}(Y_\tau) - \frac{\dot\beta_{1-\tau}}{\beta_{1-\tau}}\,Y_\tau + \gamma_{1-\tau}\,s_{1-\tau}(Y_\tau)\\
        &= -\frac{\dot\beta_{1-\tau}}{\beta_{1-\tau}}\,Y_\tau.
    \end{aligned}
\end{equation}
The score terms cancel, leaving
\begin{equation}
    \label{eq:reversed:sde}
    dY_\tau = -\frac{\dot\beta_{1-\tau}}{\beta_{1-\tau}}\,Y_\tau\,d\tau + \sqrt{\frac{2\alpha_{1-\tau}\gamma_{1-\tau}}{\beta_{1-\tau}}}\,d\bar W_\tau.
\end{equation}
This is a linear SDE with no dependence on the target distribution $\mu$ or the score $s_t$---it is purely determined by the interpolant schedule $\alpha_t$, $\beta_t$.

\paragraph{Interpretation.} The cancellation of the score is a consequence of the optimal choice $D_t^* = \alpha_t\gamma_t/\beta_t$. For other choices of $D_t$, the reversed process would still depend on $s_t$ and hence on the target $\mu$. This is the mirror image of standard score-based diffusion~\citep{song2020score}, where the forward process is an OU process (score-free) and the backward process requires the score; here, the forward process uses the score, and with $D_t^*$, the backward process reduces to OU-type dynamics.

\section{Exponential Families and the $D\to\infty$ Limit}
\label{app:exponential}

For finite $D_t$, the generated density $\rho_t^D$ is not in any particular parametric family. However, in the limit $D\to\infty$ (constant in time), the Fokker--Planck equation
\begin{equation}
    \partial_t\rho_t^D + \nabla\cdot\!\big(\hat b_t\;\rho_t^D\big) = D\,\nabla\cdot\!\big(\hat s_t\;\rho_t^D + \nabla\rho_t^D\big)
\end{equation}
becomes dominated by its right-hand side, which equilibrates to an exponential family member $\bar\rho_t \propto \exp(-\theta_t^\top\Phi)$ for some $\theta_t$ determined by $\hat s_t$, provided that the corresponding SDE has an invariant distribution. Even if this distribution exists, it is a maximum entropy distribution matching the moments implied by the estimated drift $\hat b_t$, rather than the true moments $\E[\Phi(I_t)]$. There is no reason to expect this to be closer to the target than what is achieved with finite $D_t$.

This explains why the $D\to\infty$ limit is natural in Moment-Guided Diffusion~\citep{lempereur2025mgd}, where the target moments are imposed explicitly, but not here: finite $D_t^*$ better exploits the information in $\hat b_t$ by preserving the transport dynamics.

\section{Experimental Details}
\label{app:details}

\textbf{MNIST U-Net.} A sinusoidal time embedding (scaled for $t\in[0,1]$) is passed through a small MLP and injected into every UNetBlock by projection to channel size and additive conditioning. The U-Net has two encoder stages ($32\to64$ channels) with $3\times3$ convolutions, BatchNorm, and ReLU, separated by $2\times2$ max pooling, followed by a 128-channel bottleneck. The decoder mirrors this with transposed-conv upsampling and skip connections. A final $1\times1$ conv produces the single-channel output.

\paragraph{Scattering Spectra.}
The scattering spectra model exploits cross-scale dependencies by considering the covariances between wavelet transform moduli of the field, compressed in a wavelet basis. A field of size $d$ is described with $O(\log(d)^3)$ low-degree functions that capture moments of multiscale processes up to order four.

For the S\&P time series, we follow~\citet{morel2024path} and use $J = 8$ dyadic scales, leading to $P=217$ coefficients. For two-dimensional physical fields, we follow~\citet{cheng2024scattering} and use $J=6$ dyadic scales and $L=4$ orientations, yielding $P=6803$ coefficients. In both cases, the scattering transform is computed with the Morlet wavelet~\citep{morlet1982wavea,morlet1982waveb}. Feature gradients $\nabla\phi_p(x)$ are computed via automatic differentiation.

\section{Cross-Domain Model Composition}
\label{app:cross:domain}

A distinctive feature of our framework is that the pretrained velocity fields in the ensemble need not be trained on the same dataset. Since the linear system~\eqref{eq:pretrained:system} is solved using data pairs $(z_n, a_n)$ from the \emph{target} distribution, the coefficients $\eta_t$ optimally reweight the feature gradients for the target, regardless of their training provenance.

\paragraph{Setup.} We train 10 weak models (50 SGD steps, mini-batches of 128) on each of three source domains---Fashion-MNIST, EMNIST (letters), and Kuzushiji-MNIST---as well as 10 models on the target domain, MNIST. Each model uses the same U-Net backbone with an independent random initialization. At inference, we compose all 40 models ($P=40$) and solve the linear system~\eqref{eq:pretrained:system} using $N=10{,}000$ MNIST data pairs. The 10 MNIST models alone serve as a baseline.

\paragraph{Results.} Figure~\ref{fig:adatpation_enhancement} displays representative samples. The top four rows show outputs from individual weak models trained on each domain. The bottom row shows samples generated by composing all 40 models: the resulting digits are sharper and more coherent than those from the 10 MNIST models alone. The source-domain velocity fields provide useful low-level feature gradients (edges, strokes, curves) that the linear system repurposes for MNIST generation.

\begin{figure}[t]
    \centering
    \includegraphics[width=\linewidth]{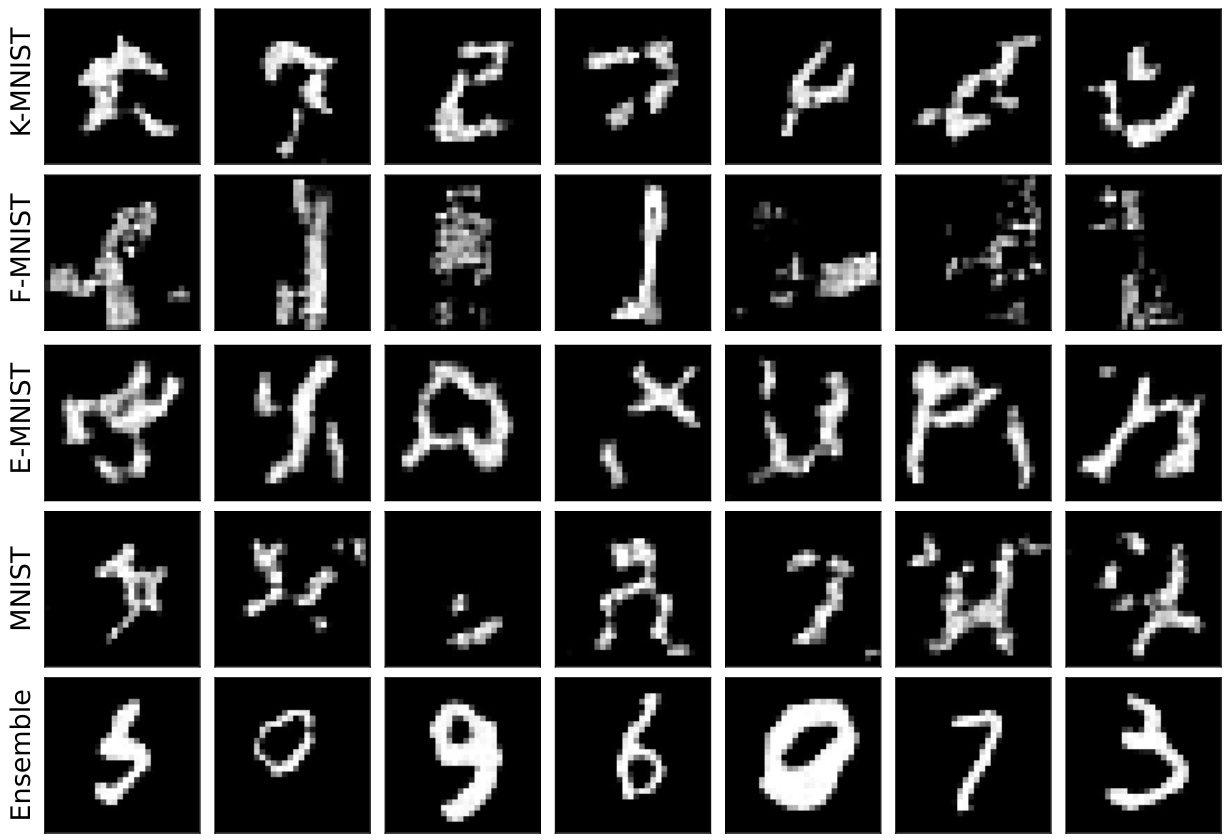}
    \caption{Cross-domain model composition for MNIST generation. Rows 1--4: samples from individual weak models trained on Kuzushiji-MNIST, Fashion-MNIST, EMNIST (letters), and MNIST, respectively. Row 5: samples generated by composing all 40 source- and target-domain models via Algorithm~\ref{alg:generation}, using MNIST data to solve the linear system.}
    \label{fig:adatpation_enhancement}
\end{figure}

\end{document}

%% file: math_commands.tex
\usepackage{amsmath,amsfonts,bm}

\def\eqref#1{equation~\ref{#1}}

\def\1{\bm{1}}

\DeclareMathAlphabet{\mathsfit}{\encodingdefault}{\sfdefault}{m}{sl}
\SetMathAlphabet{\mathsfit}{bold}{\encodingdefault}{\sfdefault}{bx}{n}

\newcommand{\E}{\mathbb{E}}

\newcommand{\R}{\mathbb{R}}

